\newtheorem{teo}{Theorem}
\newtheorem{coro}{Corollary}
\newtheorem{lema}{Lemma}
\newtheorem{ejem}{Example}
\newtheorem{defi}{Definition}
\newtheorem{obs}{Observation}
\newcommand{\nc}{\newcommand}
\newcommand\Omit[1]{}
\newcommand{\ramon}[1]{}
\newcommand{\triche}{\vspace{-1.7mm}}
\newcommand{\Set}[1]{\{ {#1} \}}
\nc{\vecleximi}{\succeq^{lex}_ {max}}
\renewenvironment{proof}{\noindent \bf Proof:
\setlength{\parskip}{0pt} \rm}{\ \null
\null \ \hfill \rule[-1mm]{1.2mm}{2mm}\rule[-2mm]{0mm}{4mm} \\[-.5mm]}
\nc{\Endproof}{\null \ \hfill \rule[-1mm]{1.2mm}{2mm}\rule[-2mm]{0mm}{4mm} \\[-.5mm]}
\nc{\OI}{{\bf OI}}
\nc{\Pv}{{\bf P5}}
\nc{\Pu}{{\bf P1}}
\nc{\Pii}{{\bf P2}}
\nc{\Piii}{{\bf P3}}
\nc{\Piv}{{\bf P4}}
\nc{\LM}{{\bf LM}}
\nc{\RM}{{\bf RM}}
\nc{\Di}{{\bf D1}}
\nc{\Dii}{{\bf D2}}
\nc{\Diii}{{\bf D3}}
\nc{\Li}{{\bf Lex1}}
\nc{\Lii}{{\bf Lex2}}
\nc{\Liii}{{\bf Lex3}}
\nc{\LIi}{{\bf LI1}}
\nc{\LIii}{{\bf LI2}}
\nc{\LIiii}{{\bf LI3}}
\nc{\Mon}{{\bf Mon}}
\nc{\E}{{\bf Ext}}
\newcommand{\lit}{{\it Lit}}
\newcommand{\fl}{\rightarrow}
\newcommand{\cns}{C_{fc}}
\newcommand{\vac}{\varnothing}
\newcommand{\dm}{\diamond}
\newcommand{\conmu}{{\scriptstyle \triangle}}
\newcommand{\nat}{\mathds{N}}
\newcommand{\SA}{\mbox{\sf SA}}
\newcommand{\myFP}{\mbox{\sf FP}}
\newcommand{\SAu}{\mbox{\sf SA1}}
\newcommand{\SAd}{\mbox{\sf SA2}}
\newcommand{\SAt}{\mbox{\sf SA3}}
\newcommand{\SAc}{\mbox{\sf SA4}}
\newcommand{\SAci}{\mbox{\sf SA5}}
\newcommand{\SAs}{\mbox{\sf SA6}}
\newcommand{\SAsie}{\mbox{\sf SA7}}
\newcommand{\SAo}{\mbox{\sf SA8}}
\begin{document}

\title{\bf Exploring the rationality of some syntactic merging operators\\ (extended version)}%
\author{ José Luis Chacón \and Ram\'{o}n {Pino~Pérez}}
\institute{ \em
Departamento de Matem\'{a}ticas\\
  Facultad de Ciencias\\
  Universidad de Los Andes\\
   M\'{e}rida, Venezuela\\
\email{\{jlchacon,pino\}@ula.ve}}
\maketitle

\begin{abstract}
Most  merging operators are defined by semantics methods which have very high computational complexity. In order to have operators with a lower computational complexity,
some merging operators defined in a syntactical way have be proposed. In this work we define some syntactical merging operators and exploring its rationality properties.
To do that we constrain the belief bases to be sets of formulas very close to logic programs and the underlying logic is defined through forward chaining rule (Modus Ponens).
We propose two types of operators: arbitration operators when the inputs are only two bases and fusion with integrity constraints operators.
We introduce a set of postulates inspired of postulates LS, proposed by Liberatore and Shaerf and then we analyzed the first class of operators through these postulates.
We also introduce a set of postulates inspired of postulates KP, proposed by Konieczny and Pino Pérez and then we analyzed the second class of operators through these postulates.
\end{abstract}

  \section{Introduction}
 Belief merging   \cite{Rev93,Rev97,BKM91,BKMS92,Lin94,Lin95,Lin96,LS98,KP02} aims at  combining several pieces of information when there is no strict precedence between them, unlike
 belief revision \cite{AGM,Gar,KM,Gar2} where one combines two pieces of information one of which has higher priority. The agent faces several conflicting pieces of information coming from several sources of equal reliability\footnote{More generally the sources can have different reliability, but we will focus on the case where all the sources have the same reliability. There is already a lot to say in this case.}, and he has to build a coherent description of the world from them. One important aspect of belief merging, differentiating
 this theory from belief revision -even from non-prioritized belief revision (see \cite{Han98b})- is the fact that $n$ sources of information (with $n\geq 2$) are considered.

 This work is about belief merging in the framework of logic-based representation of beliefs. In this framework beliefs are sets of propositional formulas. Many merging operators have
 been defined in that setting (a complete  survey of  logic-based merging is \cite{KP11}). Most  merging operators are based in semantical representations and the computational complexity of the entailment problem is at least in the second level of the polynomial hierarchy. Precisely, the problem of deciding if a formula is entailed by the revised base  is in the class $\Pi^P_2$ \cite{EG92,Neb91} and the fact that belief revision operators are a particular case of belief merging operators \cite{KP02} say us that the last operators are
 complex at least concerning the entailment problem.

 In recent years there has been a growing interest in studying belief revision and merging in some particular fragments of propositional logic. In particular Horn clauses \cite{Delgrande08,LangloisSST08,DelgrandeW10,DelgrandeP11}, logic programs \cite{HuePW09,DelgrandeSTW09} and other more general definable fragments \cite{CreignouPPW12}.

 Some works have been done to define change operators in a syntactic way \cite{HueWP08,BJKP98}. One interesting feature of  operators defined in \cite{BJKP98} is that the computational complexity is polynomial. In that work
some revision operators and update operators are syntactically defined with a restriction of the language and the logic. Therein the only inference rule is Modus Ponens
 and the formulas are very close to clauses in Logic Programming.  But the semantics is classical and very simple and natural. Our representation of beliefs will be close to Logic Programs but with the natural and classical semantics. In the present work we study some operators of merging for which the beliefs have this simple representation.

 The first idea we explore is the use of a revision operator $\ast$ to define binary merging operator $\conmu$. The key point here is trying to emulate the following equation\footnote{The symbols $\varphi$, $\psi$ and $\mu$ (with subscripts if necessary) denote propositional formulas which are usually used to represent beliefs.}
 \begin{eqnarray}\label{guide_rev_comu}
 \varphi\conmu \psi = (\varphi\ast\psi)\vee(\psi\ast\varphi)
 \end{eqnarray}

  Notice that this kind of operators have been called arbitration operators by Liberatore and Schaerf  \cite{LS95,LS98}.
 Actually they  characterized these operators by a set of postulates of rationality. Here we adapt the postulates to the simple logic we use, we define syntactic arbitration operators and we study them to the light of Liberatore and Shaerf postulates modified.

 We explore afterwards  the idea of defining merging operators with integrity constrains using the following equation as a guide:
 \begin{eqnarray}\label{guide_fusion}
 \Delta_\mu(\varphi_1,\dots,\varphi_n)=\left\{
                                    \begin{array}{ll}
                                      \varphi_1\wedge \cdots\wedge \varphi_n\wedge\mu, & \hbox{if consistent;} \\
                                      \bigvee(\varphi_i\ast \mu), & \hbox{otherwise.}
                                    \end{array}
                                  \right.
 \end{eqnarray}
 We shall define syntactic merging operators with integrity constraints adapting the previous equation to the logic of forward chaining we use;
 we shall define the postulates characterizing merging operators (the natural adaptation of KP postulates \cite{KP02})   in this simple setting and then we shall analyze the satisfaction of these
 postulates     by our syntactic integrity constraint merging operators.

  For the two kinds of operators we will see, on one hand, that the computational complexity is polynomial. This is a considerable gain with respect to the more classical merging operators; on the other hand, some postulates are not satisfied. That is the compromise in order to have tractable operators: you gain in computational complexity but you lose some
  properties.

  The rest of the paper is organized as follows:
  Section \ref{preli} contains the basic definitions and the syntactic revision operators (first defined in  \cite{BJKP98}) used later. Section \ref{arbi} is devoted to
  study a syntactic arbitrage operator following the lines of Equation \ref{guide_rev_comu}. Section \ref{fusion}  is devoted to
    analyze the syntactic merging operators defined following the lines of Equation \ref{guide_fusion}. We finish with a section containing some concluding remarks.

 \section{Preliminaries}\label{preli}
 Let $\cal{V}$ be a finite set of propositional variables. The elements of $\cal{V}$  are called atoms.
 A literal (or fact) is an atom or the negation of an atom. The set
 of literals will be denoted \lit. A {\it rule} is a formula of the shape
 $l_1\wedge l_2\cdots\wedge l_n\fl l_{n+1}$ where $l_i$ is a literal for
 $i=1,\dots,n+1$. Such a rule will be denoted  $l_1, l_2,\cdots,l_n\fl l_{n+1}$;
 the part $l_1, l_2,\cdots,l_n$ is called the body of the rule and $l_{n+1}$ is called the head of the rule.
 A fact $l$ can be seen as a rule $\fl l$ with empty body.

 Let $R$ and $L$  be a finite set of rules with non empty body and a finite set of facts respectively.
 A program $P$ is a set of the form $R\cup L$. In such a case we will say that the elements of $R$ are the rules of $P$ and the elements of $L$ are the facts of $P$. The set of programs will be denoted {\it Prog}. Let $P=R\cup L$  be a program. We define the of consequences by forward chaining of
 $P$, denoted $\cns(P)$, as the smallest set of literals (with respect to inclusion)
  $L'$ such that:
 \begin{itemize}
    \item  [(i)]$L\subseteq L'$
    \item [(ii)]If $l_1,l_2,\dots,l_n\fl l$ is in $R$ and $l_i\in L'$ for $i=1,\dots,n$
    then $l\in L'$.
    \item [(iii)]If $L'$ contains two opposed literals (an atom and its negation) then $L'=\lit$.
 \end{itemize}

 A program $P$ is consistent if $\cns(P)$ does not contain two opposed literals (or alternatively $\cns(P)\not=\lit$), otherwise we say that
 $P$ is inconsistent and it will be denoted $P\vdash \bot$.
  Let
  $R$, $L$ and $L'$  be a finite set of rules and two finite sets of facts respectively.  $L$ is said to be  $R$-consistent if $R\cup L$ is consistent.
 $L$ is said to be $R\cup L'$ consistent if $L\cup(R\,\cup L')$ is consistent. Thus,
  if $P$ is a program, $L$ is $P$-consistent if $L\cup P$ is consistent.
 Let $L$ and $P$ be a set of literals and a program respectively. $L$ is said to be  fc-consequence of $P$ if  $L\subseteq\cns(P)$.

 It is important to note that in this setting the problem of the consistency is polynomial and the problem of a literal entailment is also polynomial.

 We can define a very natural hierarchy over $\cns(P)$ for a program  $P$. Let $L$ be $\cns(P)$. We define a partition of $L$, $L=L_0\cup L_1\cdots\cup L_r$, inductively in the following way:  $L_0$ is the set of facts of program  $P$; $L_i$ are the literals  $l\in L\setminus\{L_0\cup L_1\cdots\cup L_{i-1}\}$ such that there is a rule $l_1,l_2,\dots,l_k\fl l\in P$ satisfying  $\{l_1,\dots,l_k\}\subset L_0\cup L_1\cdots\cup L_{i-1}$. Thus,
  in $L_1$ we find the literals  not in  $L_0$ and  obtained by the rules of $P$ using the literals of $L_0$. In $L_i$ we find the literals  not in  $L_j$ with $0\leq j<i$
  obtained by the rules of $P$ using the literals of $L_0\cup\cdots\cup L_{i-1}$. Since  $P=R\cup L_0$, where $R$ is a finite set of rules and $L_0$ is a finite set of facts, it follows that  $|L|\leq|L_0|+|R|$, so each   $L_i$ is a finite set.

 \begin{ejem}
 Consider $P=\{a\fl b\,;\,a\fl c\,;\, b\fl t\,;\,c\fl s\,;\,t\fl s\,;\,s\fl w\,;\,u\fl h\,;\,a\,;\,u\}$ then
  $L_0=\{a,\,u\}$, $L_1=\{b,\,c,\,h\}$,  $L_2=\{t,\,s\}$ and  $L_3=\{w\}$.
  \end{ejem}

     The following result whose proof is easy will be useful later:
     \begin{lema}\label{lem1}
     Let $Q,P$ be programs, then $\cns(Q\cup P)=\cns(\cns(Q\cup P)\cup P)$.
     \end{lema}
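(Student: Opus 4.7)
The plan is to prove the equality by a double inclusion, treating $\cns(Q \cup P)$ on the right-hand side as a set of facts (since it is a set of literals) that is added to the program $P$.

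For the inclusion $\cns(Q\cup P) \subseteq \cns(\cns(Q\cup P)\cup P)$, I would appeal directly to clause (i) of the definition of $\cns$: every literal of $\cns(Q\cup P)$ is a fact of the program $\cns(Q\cup P)\cup P$, so it belongs to any set satisfying (i)--(iii), and in particular to $\cns(\cns(Q\cup P)\cup P)$.

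The reverse inclusion $\cns(\cns(Q\cup P)\cup P)\subseteq \cns(Q\cup P)$ is the substantive part, and I would attack it by exploiting the minimality built into the definition of $\cns$. Concretely, I would verify that the set $L^\star := \cns(Q\cup P)$ itself satisfies the three closure conditions with respect to the program $\cns(Q\cup P)\cup P$: (i) the facts of that program are either elements of $\cns(Q\cup P)$ (trivially) or facts of $P$ (hence in $\cns(Q\cup P)$ by clause (i) applied to $Q\cup P$); (ii) the rules of $\cns(Q\cup P)\cup P$ are exactly the rules of $P$ (a subset of $Q\cup P$), so closure of $L^\star$ under these rules is immediate from clause (ii) applied to $Q\cup P$; (iii) $L^\star$ already satisfies the contradiction clause by its own definition. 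Since $\cns(\cns(Q\cup P)\cup P)$ is the smallest set with these properties, it is contained in $L^\star$.

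I do not expect any real obstacle here: the only subtle point to keep straight is the dual reading of $\cns(Q\cup P)$ as, on one hand, the output of a closure operator and, on the other, a collection of facts to be fed back into a program. Making this bookkeeping explicit is what reduces the argument to a direct check of the three defining clauses.
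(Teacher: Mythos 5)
Your proof is correct: the first inclusion is immediate from clause (i), and the second follows by checking that $\cns(Q\cup P)$ itself satisfies (i)--(iii) for the program $\cns(Q\cup P)\cup P$ and invoking minimality. The paper omits the proof as ``easy,'' and your argument is exactly the intended one.
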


Now we shall recall  some definitions of operators in \cite{BJKP98}.

 \begin{defi}[Exceptional sets of literals and rules]
 Let $P$ be a  program. A set of literals $L$ is said to be exceptional for $P$ if
 $L$ is not $P$-consistent. A rule $L\fl l$ in $P$ is exceptional for $P$ if $L$ is exceptional for
 $P$.
 \end{defi}
 
 \begin{obs}
 The point that makes all computations simple is  the fact that, given the definition of entailment and consistency we have,
   to know if a set of literals is $P$-consistent  is polynomial.
 \end{obs}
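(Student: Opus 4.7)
The plan is to exhibit an explicit polynomial-time procedure that, on input a program $P=R\cup L_0$ and a set of literals $L$, decides whether $L$ is $P$-consistent. By the very definitions given, $L$ is $P$-consistent iff $\cns(L\cup P)\neq\lit$, which (by clause (iii) of the definition of $\cns$) is equivalent to saying that the set of literals produced by forward chaining from $L\cup L_0$ through the rules of $R$ never contains two opposed literals. So it is enough to compute $\cns(L\cup P)$ effectively and then scan it.

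First I would describe the standard forward-chaining closure algorithm: initialize $L':=L\cup L_0$; then repeat the following loop until no literal is added — for each rule $l_1,\ldots,l_n\fl l$ in $R$, if $\{l_1,\ldots,l_n\}\subseteq L'$ and $l\notin L'$, add $l$ to $L'$. After the fixed point is reached, test whether $L'$ contains some atom together with its negation; if yes, declare inconsistent, otherwise consistent. Correctness follows directly from the inductive definition of $\cns$ using the hierarchy $L_0\cup L_1\cup\cdots\cup L_r$ described just before the observation.

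For the complexity analysis I would argue as follows. Let $n=|\mathcal{V}|$, so there are at most $2n$ distinct literals; hence $|L'|$ is bounded by $2n$ throughout, and the main loop executes at most $2n$ times (each successful iteration strictly increases $|L'|$). Each iteration scans at most $|R|$ rules, and for each rule the body-inclusion test takes time polynomial in the rule length, which is bounded by $|P|$. The final opposed-literals test is linear in $|L'|$. Therefore the whole procedure runs in time polynomial in $|L|+|P|$, which is precisely the claim of the observation. There is no real obstacle here: the whole argument is a standard Horn-style fixed-point computation, and the only care needed is to notice that the restrictive shape of the language (literals as atoms, rules as implications between literals) is exactly what prevents the usual exponential blow-up of propositional entailment.
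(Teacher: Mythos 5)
Your argument is correct and is exactly the standard fixed-point computation that the paper implicitly relies on (the observation is stated without proof, as is the earlier remark that consistency and literal entailment are polynomial in this setting). The one subtlety you handle properly is deferring the opposed-literals test of clause (iii) to the end of the closure computation, which is harmless for deciding consistency.
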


 Notice that if $P$ is not consistent all its rules are exceptional. The following hierarchy  of a program appears in \cite{Pea90}. It has been very useful (see for instance \cite{LM02}).

 \begin{defi}[Base]
    Let $P$ be a program. We define $(P_i)_{i\in w}$ a decreasing sequence of programs in the following way:
    $P_0$ is $P$ and $P_{i+1}$ is the set of exceptional rules of $P_i$. Since
     $P$ is finite, there is a first integer  $n_0$ such that for all $m>n_0$,
     $P_m=P_{n_0}$. If $P_{n_0}\neq\vac$ we say that $\langle
    P_0,\dots,P_{n_0},\vac\rangle$ is the base of $P$. If $P_{n_0}=\vac$, then
    $\langle P_0,\dots,P_{n_0}\rangle$ will be the base of $P$.
 \end{defi}

 For instance, if we take
 $P=\{a\fl b\,;\,b\fl\neg c\,;\,\neg c\fl\neg a\,;\,\neg c\fl b\,;\,\neg a\fl\neg b\,;\,\neg a\fl \neg c\}$,
 it is easy to see that all the rules of $P$ are exceptional, thus the base of $P$ is $\langle P,\emptyset\rangle$. Now we give another
 (classic taxonomic) example:

 \begin{ejem} Take $P=\{m\fl s\,;\,c\fl m\,;\,c\fl\neg s\,;\,n\fl c\,;\,n\fl s\}$ where
  $m,s,c,n$ represent  mollusk, shell, cephalopod and nautilus respectively. The base is $\langle P_0,P_1,P_2,P_3\rangle$ where
 $ P_0=\{m\fl s\,;\,c\fl m\,;\,c\fl\neg s\,;\,n\fl c\,;\,n\fl s\}$,
 $P_1=\{c\fl m\,;\,c\fl\neg s\,;\,n\fl c\,;\,n\fl s\}$,
 $P_2=\{n\fl c\,;\,n\fl s\}$ and $ P_3=\vac$
  \end{ejem}

 \begin{defi}[Rank function]
 Let $P$ be a  program and let  $\langle P_0,\dots,P_{n}\rangle$ be its base. We define
 $\rho(P,\cdot):Prog\fl \nat$, the rank function, as follows: $\rho(P,Q)=\mbox{min}\{i\in
 \nat:Q\;\mbox{is}\;P_i-\mbox{consistent}\}$ if $P$ and $Q$ are consistent, otherwise
 $\rho(P,Q)=n$.
 \end{defi}

 Notice that if $Q_1\subseteq Q_2$ then $\rho(P,Q_1)\leq\rho(P,Q_2)$.

 \begin{defi}[Rank revision]\label{revgra}
 Let $P$ and $P'$ be two  programs. We define the rank revision operator of $P$ by $P'$, denoted
  $P\circ_{rk}P'$, as follows:
 \[P\circ_{rk}P'=P_{\rho(P,P')}\cup P'\]
 \end{defi}
 That is, we take the new piece of information $P'$ together with the first program in the base (the least exceptional)
 which is consistent with $P'$.

 In order to generalize  this operator we need the define the hull of a program $P$ with respect another program $P'$.
 Let $I_P(P')$ the set of subsets of $P$ which are  consistent with
 $P'$, contain $P_{\rho(P,P')}$ and maximal with these properties. We define $h_P:Prog\fl{\cal P}(P)$ by letting $h_P(P')=\bigcap
 I_P(P')$. The computation of $I_P(P')$ is exponential.

 \begin{defi}[Hull revision]
 Let $P$ and $P'$ be two  programs. We define the hull revision operator of $P$ by $P'$, denoted
  $P\circ_{h}P'$, as follows:
 \[P\circ_{h}P'=h_P(P')\cup P'\]
 \end{defi}

 \begin{obs}
 It is easy to see, by the definitions, that $\cns(P\circ_{rk}P')\subseteq\cns(P\circ_{h}P')$. In such a case we say that
 $\circ_{h}$ is a conservative extension of
 $\circ_{rk}$. Actually, there are examples in which the inclusion is strict.
 \end{obs}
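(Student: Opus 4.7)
The plan is to unwind the definitions for the inclusion part and then produce an explicit witness for the strictness part.

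For the inclusion, I first observe that the family $I_P(P')$ is non-empty: by the definition of $\rho(P,P')$, the program $P_{\rho(P,P')}$ is itself consistent with $P'$, so it belongs to the collection of subsets of $P$ that contain $P_{\rho(P,P')}$ and are $P'$-consistent, and finiteness of $P$ guarantees that this collection admits maximal elements. By construction every element $S\in I_P(P')$ satisfies $P_{\rho(P,P')}\subseteq S$, so $P_{\rho(P,P')}\subseteq\bigcap I_P(P')=h_P(P')$. Adjoining $P'$ to both sides yields $P\circ_{rk}P'\subseteq P\circ_{h}P'$, and since $\cns$ is monotone under program inclusion (any set of literals closed in the sense of conditions (i), (ii), (iii) for the larger program is automatically closed for the smaller one), one concludes $\cns(P\circ_{rk}P')\subseteq\cns(P\circ_{h}P')$, which is the conservative extension statement.

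For strictness, I propose the small witness $P=\{a\fl b,\; c,\; c\fl\neg b,\; d\}$ together with $P'=\{a\}$. A direct computation shows that the only rule exceptional in $P_0=P$ is $a\fl b$ (because $\{a\}$ together with $P_0$ derives both $b$ and $\neg b$), so $P_1=\{a\fl b\}$; no rule in $P_1$ is exceptional, giving base $\langle P_0,P_1,\varnothing\rangle$. Since $P_0\cup P'$ is inconsistent whereas $P_1\cup P'$ is consistent, $\rho(P,P')=1$ and $P\circ_{rk}P'=\{a,\;a\fl b\}$, yielding $\cns(P\circ_{rk}P')=\{a,b\}$. The maximal subsets of $P$ containing $P_1$ and $P'$-consistent are $\{a\fl b,\;c\fl\neg b,\;d\}$ and $\{a\fl b,\;c,\;d\}$, so $h_P(P')=\{a\fl b,\;d\}$ and $\cns(P\circ_{h}P')=\{a,b,d\}$. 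The literal $d$ witnesses the strict inclusion.

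The conceptually easy part is the inclusion itself, which is a pure definition chase; the only delicate step is the bookkeeping in the example, namely verifying which rules are exceptional at each level and enumerating the maximal $P'$-consistent supersets of $P_{\rho(P,P')}$.
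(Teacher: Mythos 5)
Your proposal is correct and follows exactly the route the paper has in mind: the inclusion is the definitional observation that every member of $I_P(P')$ contains $P_{\rho(P,P')}$, hence $P_{\rho(P,P')}\subseteq h_P(P')$ and monotonicity of $\cns$ does the rest, which is all the paper means by ``easy to see, by the definitions.'' The paper supplies no explicit witness for strictness, and your example with $P=\{a\fl b;\ c;\ c\fl\neg b;\ d\}$ and $P'=\{a\}$ checks out in every detail ($\rho(P,P')=1$, $h_P(P')=\{a\fl b;\ d\}$, and $d$ separates $\{a,b\}$ from $\{a,b,d\}$), so it is a valid completion of the claim.
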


 In order to define the extended hull revision we define first what is a flock of programs. This is simply
 a vector $\langle Q_0,\dots, Q_n\rangle$ where each $Q_i$ is a program. We use the letters ${\cal A}$, ${\cal F}$
 (with subscripts if necessary) to denote flocks. We define the concatenation of flocks in the natural way:
 $\langle P_1,\dots,P_n\rangle\cdot\langle Q_1,\dots,Q_m\rangle\stackrel{def}{=}
 \langle P_1,\dots,P_n,Q_1,\dots,Q_m\rangle$. Suppose that  $\mathcal{A}$ is a flock, say $\mathcal{A}=\langle
 Q_1,\dots,Q_n\rangle$,  we define $\cns(\mathcal{A})=\bigcap_{\,i=1}^n\cns(Q_i)$. We identify
 a  program $P$ with the flock $\langle P\rangle$. With this identification flocks are more general objects than programs.
 We are going to define revision operators of flocks by programs in which the output will be a flock.
 First we consider   $I_P(Q)$ as a flock and we give the following definition:

 \begin{defi}
  Let $P$, $P'$ be two programs. 
  Let $I_P(P')$ be as before. We put
 \[P\circ_{eh}P'=\left\{\begin{array}{lcl}
 \langle H_1\cup P',\dots,H_n\cup P'\rangle& &\mbox{if}\;I_P(P')=\{H_1,\dots,H_n\}\\
 P'& &\mbox{if}\;I_P(P')=\vac
 \end{array}\right.\]
 More generally, if $\mathcal{A}=\langle  Q_1,\dots,Q_n\rangle$,  we define
 \[\mathcal{A}\circ_{eh}P=(Q_1\circ_{eh}P)\cdot(Q_2\circ_{eh}P)\cdots(Q_n\circ_{eh}P)\]
 where the symbol $\;\cdot\;$ denotes the concatenation of flocks.
 \end{defi}

 \begin{obs}
 With the previous definition $\circ_{eh}$ is a conservative extension of $\circ_{h}$, that is,
  $\cns(P\circ_{h}P')\subseteq\cns(P\circ_{eh}P')$. To see that, it is enough to notice that $h_P(P')\subseteq
 H_i$ for all $H_i\in I_P(P')$. For this reason the operator $\circ_{eh}$ is called
 {\it extended hull revision.}
 \end{obs}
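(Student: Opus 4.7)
The claim is essentially a monotonicity-of-consequences argument, and the Observation itself hints at the key fact. My plan is to reduce the inclusion $\cns(P\circ_h P')\subseteq \cns(P\circ_{eh}P')$ to two ingredients: (i) the set-theoretic inclusion $h_P(P')\subseteq H_i$ for every $H_i\in I_P(P')$, and (ii) monotonicity of the forward-chaining consequence operator $\cns$ with respect to subset inclusion of programs.

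First I would unfold the definitions. By definition $P\circ_h P' = h_P(P')\cup P'$ with $h_P(P')=\bigcap I_P(P')$, and $P\circ_{eh}P' = \langle H_1\cup P',\dots,H_n\cup P'\rangle$ when $I_P(P')=\{H_1,\dots,H_n\}\neq\varnothing$. Since $\cns$ of a flock is defined as $\bigcap_{i=1}^n \cns(H_i\cup P')$, the statement to prove reduces to showing that $\cns(h_P(P')\cup P')\subseteq \cns(H_i\cup P')$ for each $i$. This is a pointwise inequality, one for each element of the flock.

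Next I would establish the monotonicity lemma: if $Q_1\subseteq Q_2$ then $\cns(Q_1)\subseteq \cns(Q_2)$. This is immediate from the inductive definition of $\cns$ through the layers $L_0,L_1,\dots$: every fact of $Q_1$ is a fact of $Q_2$, every rule of $Q_1$ is a rule of $Q_2$, so any derivation witnessing $l\in\cns(Q_1)$ is verbatim a derivation in $Q_2$. Applying this to $h_P(P')\cup P'\subseteq H_i\cup P'$ (which holds because $h_P(P')=\bigcap_j H_j\subseteq H_i$) gives the required inclusion for each $i$, and therefore for the intersection defining $\cns(P\circ_{eh}P')$.

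Finally, I would handle the degenerate case $I_P(P')=\varnothing$. Here $P\circ_{eh}P'=P'$ by definition. In this same situation $h_P(P')=\bigcap\varnothing$ has no canonical meaning as a subset of $P$; the cleanest way to dispose of the case is to observe that $I_P(P')=\varnothing$ forces $P'$ itself to be inconsistent (since $P_{\rho(P,P')}\cup P'$ is otherwise a consistent set that can be extended to a maximal consistent subset containing $P_{\rho(P,P')}$), in which case $P\circ_h P'$ is likewise inconsistent and both consequence sets collapse to $\lit$. The main obstacle is thus not conceptual but notational: making sure the edge case of an empty $I_P(P')$ is treated coherently with the conventions adopted earlier in the paper, so that the inclusion holds trivially there.
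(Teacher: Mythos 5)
Your argument is correct and follows essentially the same route as the paper: the inclusion reduces, via monotonicity of $\cns$ applied pointwise to each component of the flock, to the fact that $h_P(P')=\bigcap_j H_j\subseteq H_i$ for every $H_i\in I_P(P')$, which is precisely the hint the Observation itself gives. Your explicit handling of the degenerate case $I_P(P')=\varnothing$ (where $P'$ must be inconsistent and both consequence sets collapse to $\lit$) is a small extra bit of rigor the paper leaves implicit, but it does not change the approach.
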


 For a study of AGM postulates satisfied by the previous operators we suggest the reader see the work in \cite{BJKP98}.

 \section{Syntactic arbitrage operators}\label{arbi}

 Liberatore and Schaerf \cite{LS95,LS98} define  a kind of merging operators called arbitrage operators (or commutative revision operators). They consider  beliefs that are represented by propositional formulas built in a finite propositional language.
  Their operators, denoted $\dm$, map two belief bases in a new belief base and they are characterized by a set of postulates of rationality.
  Actually, Liberatore and Schaerf \cite{LS98} prove that if  $\circ$ is a revision operator (see \cite{KM91}) then $\dm$ defined by letting
 $\varphi\dm\mu=(\varphi\circ\mu)\vee(\mu\circ\varphi)$ is an arbitrage operator.
 
 Inspired by the previous ideas and having syntactic revision operators we are going to define syntactic arbitrage operators. We shall redefine the postulates of arbitrage in the syntactical framework and we shall analyse the relationships between our operators and these new postulates.

In order to define the syntactic operators and to state the new postulates, we must provide operators simulating over programs the conjunction and the disjunction over formulas respectively. Thus, the operator
 $\odot:Prog^2\fl{\cal P}(Lit)$ (simulating the conjunction) is defined by:
 \[
 P_1\odot P_2=\cns(P_1\cup P_2)
 \]
 And the operator $\oplus:Prog^2\fl{\cal P}(Lit)$ (simulating the disjunction) is defined by:
 \[P_1\oplus P_2=\cns(P_1)\cap\cns(P_2)\]
The operator $\,\odot\,$ over programs corresponds to operator $\wedge$ over formulas and
 the operator $\,\oplus\,$ over programs corresponds to operator $\vee$.

 Now we define three syntactic arbitrage operators of the shape $\dm:Prog^2\fl{\cal P}(Lit)$
 in the following way:
 \begin{eqnarray}\label{eq-arbi}
 P_1\dm P_2=(P_1\star P_2)\oplus(P_2\star P_1)
 \end{eqnarray}
 where $\star\in\{\circ_{rk},\circ_{h},\circ_{eh}\}$ with
 $\circ_{rk},\circ_{h},\circ_{eh}$ the rank revision, the hull revision and the extended hull revision respectively.
  Thus we dispose of three syntactic operators: $\dm_{rk},\,\dm_{h},\,\dm_{eh}$ called rank arbitration operator,
  hull arbitration operator and extended hull arbitration operator respectively.

 The following postulates are the natural translation of the postulates of Liberatore and Shaerf \cite{LS98} for arbitration operators to our framework
 (SA states Syntactic Arbitration):
 \[
 \begin{array}{lcl}
 (\mbox{\sf SA}1)&\quad&P\dm Q=Q\dm P\\
 (\mbox{\sf SA}2)&\quad&P\dm Q\subseteq P\odot Q\\
 (\mbox{\sf SA}3)&\quad&\mbox{If}\;P\odot Q\not\vdash\bot\;\mbox{then}\;P\odot Q\subseteq P\dm Q\\
 (\mbox{\sf SA}4)&\quad&P\dm Q\vdash\bot\;\mbox{if and only if}\;P\vdash\bot\;\mbox{and}\;Q\vdash\bot\\
 (\mbox{\sf SA}5)&\quad&\mbox{If}\;\cns(P_1)=\cns(P_2)\;\mbox{and}\;\cns(Q_1)=\cns(Q_2)\;
 \mbox{then}\;P_1\dm Q_1=P_2\dm Q_2\\
 (\mbox{\sf SA}6)&\quad&P\dm(Q_1\oplus Q_2)=\left\{
 \begin{array}{ll}
    P\dm Q_1 & \hbox{or} \\
    P\dm Q_2 & \hbox{or} \\
    (P\dm Q_1)\oplus(P\dm Q_2) & \hbox{} \\
 \end{array}
 \right.\\
 (\mbox{\sf SA}7)&\quad&P\oplus Q\subseteq P\dm Q\\
 (\mbox{\sf SA}8)&\quad&\mbox{If}\;P\not\vdash\bot\;\mbox{then}\;P\odot(P\dm Q)\not\vdash\bot
 \end{array}
\]

Postulate \mbox{\sf SA}1 states that the two pieces of information have equal priority. Postulate \SA2 says that ``conjunction" is stronger than arbitration.
Postulate \SA3, together with \SA2, say that under the consistency of the ``conjunction" of programs, such a ``conjunction"  is the result of arbitration of programs. Postulate \SA4
says that the only possibility for the inconsistency of arbitration is the inconsistency of each input. Postulate \SA5 is the equivalence of the syntax in our context.
Postulate \SA6 is the trichotomy postulate in our context. Postulate \SA7 says that arbitration is stronger than ``disjunction". Postulate \SA8 says that the output of arbitration
will be consistent with any consistent input.

 The following result summarizes the behavior of our arbitration operators with respect to the postulates above defined.
\triche
\begin{teo}
The operators $\dm_{rk},\,\dm_{h},\,\dm_{eh}$ satisfy the postulates \SAu, \SAd, \SAt, \SAc, \SAsie\ and \SAo. They don't satisfy \SAci\ nor \SAs.
\end{teo}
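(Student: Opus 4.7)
The plan is to verify the six positive postulates by unfolding the definition $P\dm Q=(P\star Q)\oplus(Q\star P)$ using a few uniform structural properties of the three revision operators, and then to refute the remaining two via explicit counterexamples. Postulate \SAu\ is immediate from commutativity of set-theoretic intersection, which is what $\oplus$ amounts to. For \SAd\ and \SAt\ the key observation is that in all three cases the revised program (or flock) is built by adding the revising input to a subset of the other program: thus $\cns(P\star Q),\cns(Q\star P)\subseteq\cns(P\cup Q)=P\odot Q$, which yields \SAd\ after intersecting; and when $P\cup Q$ is consistent, $\rho(P,Q)=0$ and $I_P(Q)=\{P\}$, so $\cns(P\star Q)=\cns(P\cup Q)=\cns(Q\star P)$, whence $P\dm Q=P\odot Q$ and \SAt\ holds. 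For \SAsie\ the opposite structural remark applies: the revision contains the revising input, so $\cns(Q)\subseteq\cns(P\star Q)$ and $\cns(P)\subseteq\cns(Q\star P)$, and intersecting gives $P\oplus Q\subseteq P\dm Q$.

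For \SAc, if $P\dm Q\vdash\bot$ the pair of opposed literals witnessing this lies in both $\cns(P\star Q)$ and $\cns(Q\star P)$, forcing each to be $\lit$ by clause (iii) of the definition of $\cns$. But for every one of the three operators, $X\star Y$ is inconsistent iff $Y$ is, since the relevant $X_\rho$ (or each $H\in I_X(Y)$) is chosen consistent with $Y$, and the only source of inconsistency is the revising input; applying this to $P\star Q$ and $Q\star P$ forces both $Q$ and $P$ to be inconsistent, and the converse is trivial. For \SAo\ I assume $P$ consistent and use that $P\dm Q\subseteq\cns(Q\star P)$; writing $Q\star P$ as $X\cup P$ (with $X$ a subset of $Q$ in the rank and hull cases, and componentwise in the extended-hull flock $\langle H_i\cup P\rangle$), Lemma~\ref{lem1} gives $\cns(\cns(H_i\cup P)\cup P)=\cns(H_i\cup P)$, so by monotonicity of $\cns$ and intersection over $i$ one obtains $\cns(P\cup(P\dm Q))\subseteq\cns(Q\star P)$, which is consistent because $P$ is.

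For \SAci\ I take $P_1=\{a\fl b,\,b\fl\neg a,\,\neg b\fl a\}$ and $P_2=\vac$; both satisfy $\cns(P_i)=\vac$, but their bases differ sharply (the three rules of $P_1$ peel off in distinct exceptional layers, producing $P_1^{(1)}=\{a\fl b,\,\neg b\fl a\}$, whereas $P_2$ has a trivial base). Revising by $Q=\{a\}$, the surviving rule $a\fl b$ in $P_1^{(1)}$ fires to give $\cns(P_1\star Q)=\{a,b\}$ while $\cns(P_2\star Q)=\{a\}$, so after computing $Q\star P_i$ and intersecting one obtains $P_1\dm Q\neq P_2\dm Q$ for each of the three operators. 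For \SAs\ I take $P=\{\neg a,\neg b\}$, $Q_1=\{a\}$, $Q_2=\{b\}$, so $Q_1\oplus Q_2=\vac$ and $P\dm(Q_1\oplus Q_2)=\cns(P)=\{\neg a,\neg b\}$, whereas $P\dm Q_1$ and $P\dm Q_2$ compute to strictly smaller sets (in fact $\vac$ for the rank operator) and their $\oplus$ is smaller still; none of the three alternatives on the right-hand side of \SAs\ matches $P\dm(Q_1\oplus Q_2)$.

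The main difficulty will be in \SAo, where the flock structure of $\circ_{eh}$ forces one to derive the identity $\cns(P\cup(P\dm Q))\subseteq\cns(Q\star P)$ by applying Lemma~\ref{lem1} to each $H_i\cup P$ separately and then intersecting, rather than by a single direct invocation; the rest of the positive cases reduce to routine bookkeeping on inclusions, and the counterexamples only require checking that the three revision operators coincide on the small programs involved.
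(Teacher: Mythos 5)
Your treatment of the six positive postulates follows the paper's proof essentially step for step: \SAu\ from commutativity of $\oplus$, \SAd\ and \SAsie\ from the two structural inclusions (each $H\in I_P(Q)$ satisfies $H\cup Q\subseteq P\cup Q$, and $Q\subseteq P\star Q$), \SAt\ from $\rho(P,Q)=0$ when $P\cup Q$ is consistent, \SAc\ from the observation that a revision is inconsistent only when the revising input is, and \SAo\ from Lemma~\ref{lem1} applied to each $T_i\cup P$ and then intersected --- so there is nothing to add there. Where you genuinely diverge is in the two counterexamples. The paper refutes \SAci\ with $P_1=\{a\fl c\,;\,b\}$, $P_2=\{b\}$, $Q_1=\{b\fl c\,;\,a\}$, $Q_2=\{a\}$, and \SAs\ with $P=\{a\fl b\,;\,a\fl c\,;\,e\}$, $Q_1=\{a\}$, $Q_2=\{b\}$: all unions involved are consistent, so \SAd\ and \SAt\ already determine every value of $\dm$ and no base, rank or hull is ever computed. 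Your counterexamples are also correct but drag the revision machinery into play, and in the \SAci\ case the inequality does not actually come from the halves you compute: the difference $\cns(P_1\star Q)=\{a,b\}$ versus $\cns(P_2\star Q)=\{a\}$ could in principle be erased by the intersection with $\cns(Q\star P_i)$; what saves you is that $Q_{\rho(Q,P_1)}=\vac$, so $\cns(Q\star P_1)=\cns(P_1)=\vac$ and $P_1\dm Q=\vac\neq\{a\}=P_2\dm Q$ --- a step you defer but must carry out. (If one objects to the empty program $P_2=\vac$, replace it by any consistent, fact-free program.) The paper's consistent-union examples buy a much shorter verification and make it immediate that the three operators coincide on them; I would adopt that simplification.
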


\begin{proof}

 \noindent $(\mbox{\sf SA}1)\quad$ It is straightforward because of the definition of $\dm_{rk},\,\dm_{h},\,\dm_{eh}$, since $P\dm Q=\cns(P\star Q)\cap\cns(Q\star P)$ and the intersection is commutative.\\[5mm]
     $(\mbox{\sf SA}2)\quad$Since $\cns(P\circ_{rk}Q)\subseteq\cns(P\circ_{h}Q)\subseteq\cns(P\circ_{eh}Q)$,
    it is enough to prove that $\dm_{eh}$ satisfies $(\mbox{\sf SA}2)$ to establish that $\circ_{rk}$ and $\circ_{h}$ satisfy also \SA2.
    Let $I_P(Q)=\{H_1,H_2,\dots,H_n\}$ and $I_Q(P)=\{T_1,T_2,\dots,T_m\}$ where each $H_i$
    is a maximal subset of $P$ containing $P_{\rho(P,Q)}$ and
    $Q$-consistent, and each  $T_j$  is a maximal subset of $Q$ containing
    $Q_{\rho(Q,P)}$ and $P$-consistent. By definition,
    $P\dm_{eh} Q=\cns(P\circ_{eh}Q)\cap\cns(Q\circ_{eh}P)$ where
     $\cns(P\circ_{eh}Q)=\bigcap_{i=1}^n\cns(H_i\cup Q)$ and
    $\cns(Q\circ_{eh}P)=\bigcap_{i=1}^m\cns(T_i\cup P)$.
    Since $H_i\subseteq P$, we have $H_i\cup Q\subseteq P\cup Q$;
    analogously  $T_i\cup P\subseteq Q\cup P$. By the monotonicity of $\cns$ we have
        $P\dm_{eh} Q\subseteq \cns(P\cup Q)=P\odot Q$. Thus, \SA2 is satisfied. \\[3mm]
     $(\mbox{\sf SA}3)\quad$ Suppose $P\odot Q\not\vdash\bot$, that is  $\cns(P\cup Q)\not\vdash\bot$, so
     $P\cup Q$ is consistent. Then $\rho(P,Q)=0$, that is
     $P_{\rho(P,Q)}=P$ and therefore $P\circ_{rk}Q=P\cup Q$. Analogously,
      $Q\circ_{rk}P=Q\cup P$; therefore
     $P\dm_{rk}Q=\cns(P\cup Q)=P\odot Q$. Thus,
     $P\odot Q\subseteq P\dm_{rk}Q\subseteq P\dm_hQ\subseteq P\dm_{eh}Q$. That is,
     Postulate $(\mbox{\sf SA}3)$ is satisfied by our three operators.\\[3mm]
     $(\mbox{\sf SA}4)\quad$ Suppose $P\dm_{eh} Q\vdash\bot$. We want to see that
     $\cns(P\circ_{eh}Q)\vdash\bot$ and $\cns(Q\circ_{eh}P)\vdash\bot$.
     Consider $I_P(Q)=\{H_1,H_2,\dots,H_n\}$ and $I_Q(P)=\{T_1,T_2,\dots,T_M\}$ where each $H_i$
    is a maximal subset of $P$ containing $P_{\rho(P,Q)}$ and
    $Q$-consistent, and each  $T_j$  is a maximal subset of $Q$ containing
    $Q_{\rho(Q,P)}$ and $P$-consistent. We have $\bigcap_{\,i=i}^{\,n}\cns(H_i\cup
    Q)$ is not consistent, therefore $\cns(H_i\cup Q)$ is not consistent; in particular
    $\cns(P_{\rho(P,Q)}\cup Q)$ is not consistent. By definition of $\circ_{rk}$, it follows that  $P_{\rho(P,Q)}=\vac$,
    Thus $\cns(Q)\vdash\bot$. With a similar argument, starting from  $\cns(Q\circ_{eh}P)\vdash\bot$ we obtain $\cns(P)\vdash\bot$.
    This proves one direction of Postulate \SA4, because if $P\dm_{rm}Q\vdash\bot$
    or $P\dm_hQ\vdash\bot$ necessarily $P\dm_{eh} Q\vdash\bot$.

    Conversely, suppose that $P$ and $Q$ are inconsistent. We want to see that
    $P\dm_{rm}Q\vdash\bot$,  $P\dm_hQ\vdash\bot$ and $P\dm_{eh} Q\vdash\bot$. As before
    it is enough to see that $P\dm_{rm}Q\vdash\bot$. By hypothesis, $\cns(P)=Lit$ and $\cns(Q)=Lit$, but
    $P\dm_gQ=\cns(P_{\rho(P,Q)}\cup
    Q)\cap\cns(Q_{\rho(Q,P)}\cup P)$.
     Moreover, $\cns(P_{\rho(P,Q)}\cup  Q)\supseteq\cns(Q)=Lit$ and $\cns(Q_{\rho(Q,P)}\cup P)\supseteq\cns(P)=Lit$.
     Therefore
    $P\dm_{rk}Q=Lit$.\\[3mm]
     $(\mbox{\sf SA}5)\quad$ Our syntactic arbitration operators don't satisfy this postulate. We build a counterexample.
     Define $P_1=\{a\fl
     c\,;\,b\},\;P_2=\{b\},\;Q_1=\{b\fl c\,;\,a),\;Q_2=\{a\}$. It is clear that $\cns(P_1)=\cns(P_2)=\{b\}$ and
      $\cns(Q_1)=\cns(Q_2)=\{a\}$; since $P_i\cup Q_i$  is consistent for $i=1,2$,
     we have, by $(\mbox{\sf SA}2)$ and $(\mbox{\sf SA}3)$,  $P_i\dm Q_i=\cns(P_i\cup Q_i)$ for
      $\dm\in\Set{\dm_{rk}, \dm_{h},\dm_{eh}}$. Notice that $P_1\cup Q_1=\{a\fl
     c;\,b\fl c;\,a;\,b\}$, thus $\cns(P_1\cup Q_1)=\{a,b,c\}$; also notice that $\cns(P_2\cup
     Q_2)=\cns(\{b,a\})=\{a,b\}$. Therefore our operators are syntax dependent.\\[3mm]
     $(\mbox{\sf SA}6)\quad$Our syntactic arbitration operators don't satisfy this postulate. We build a counterexample.
     Let's define $P=\{a\fl b\,;\,a\fl c\,;\,e\},\;Q_1=\{a\},\;Q_2=\{b\}$. Since $P\cup (Q_1\oplus Q_2)=P$,
     $P\cup Q_1=\{a\fl b\,;\,a\fl c\,;\,e\,;\,a\}$ and $P\cup Q_1=\{a\fl b\,;\,a\fl c\,;\,e\,;\,b\}$
     are  consistent, we have for $\dm\in\{\dm_{rk},\,\dm_h,\,\dm_{eh}\}$ the following equalities:
     $P\dm(Q_1\oplus Q_2)=\cns(P)=\{e\}$, $P\dm Q_1=P\odot Q_1=\{a,b,c,e\}$, $P\dm Q_2=P\odot Q_2=\{b,e\}$ and $(P\dm Q_1)\oplus(P\dm Q_2)=\{b,e\}$.
     Thus it is clear that
     $P\dm(Q_1\oplus Q_2)$ is different from the options in the postulate.\\[3mm]
     $(\mbox{\sf SA}7)\quad$ This postulate is verified by our three operators. Actually,
     $P\circ_{rk}Q=P_{\rho(P,Q)}\cup Q\supseteq Q$ and also $Q\circ_{rk}P\supseteq
     P$; from this it follows $\cns(P)\subseteq\cns(Q\circ_{rk}P)$ and
     $\cns(Q)\subseteq\cns(P\circ_{rk}Q)$ and by definition
     $P\oplus Q\subseteq P\dm_{rk}Q\subseteq P\dm_cQ\subseteq P\dm_{eh} Q$.
     \\[3mm]
     $(\mbox{\sf SA}8)\quad$ Assume that $\cns(P)\neq Lit$. Consider $I_Q(P)=\{\{T_1,T_2,\dots,T_m\}$
     where $T_i$ is a maximal subset of $Q$ containing
    $Q_{\rho(Q,P)}$ and $P$-consistent. Since $P$ is consistent,
      $\cns(T_i\cup P)\neq Lit$ for $i=1,\dots,m$.
    Notice that $P\dm_{eh}Q=(P\circ_{eh}Q)\oplus(Q\circ_{eh}P)\subseteq \bigcap_{i=1}^m
     \cns(T_i\cup P)$. Then, by  Lemma \ref{lem1},
     $P\odot (P\dm_{eh}Q)\subseteq \cns(P\cup \cns(T_i\cup P))=\cns(T_i\cup P)$. Therefore,
      $P\odot (P\dm_{eh}Q)\neq Lit$ and necessarily
      $P\odot (P\dm_{rk}Q)\neq Lit$ and $P\odot (P\dm_cQ)\neq Lit$.
     
     \end{proof}

     Now we illustrate the behavior of our three syntactic operators. In particular we shall see that they have behaviors well differentiated.

     \begin{ejem}
     Let $P$ and  $Q$ be two programs defined as follows:
     $
     P=\{a,\,b\fl\neg c\,;\,b\fl d\,;\,b\fl \neg c\,;\,\neg c\fl e\,;\,a,\neg c\fl f\,;\,a\}$ and
     $Q=\{a,\,b\fl c\,;\,a\fl e\,;\,a,\,e\fl c\,;\,a,\,e\fl d\,;\,c\fl d\,;\,c\fl f\,;\,b\}$. Then
     $\cns(P)=\{a\}$ and  $\cns(Q)=\{b\}$.
      It is easy to verify that  the bases of  $P$ and $Q$ have two levels.
     No rule is exceptional, thus $P_1=Q_1=\vac$.
      Since $P\cup Q$ is not consistent, we have
     $P_{\rho(P,Q)}=Q_{\rho(Q,P)}=\vac$. Therefore $I_Q(P)$ is the set of maximal subsets of
      $Q$ which are  $P$-consistent. With a little computation, we can verify that
     $I_Q(P)=\{T_1,\,T_2\}$ where $T_1=\{a\fl e\,;\,a,\,e\fl d\,;\,c\fl d\,;\,c\fl f\,;\,b\}$ and
       $ T_2=\{a,\,b\fl c\,;\,a\fl e\,;\,a,\,e\fl c\,;\,a,\,e\fl d\,;\,c\fl d\,;\,c\fl f\}$. Moreover
        $\cns(T_1\cup P)=\{a,b,\neg c,d,e,f\}$ and $\cns(T_2\cup P)=\{a,d,e\}$.
        For the same reasons as before,  $I_P(Q)$ is the set of maximal subsets of
        $P$ which are $Q$-consistent. This can be easily computed:
        $I_P(Q)=\{H_1,\,H_2\}$ where $H_1=\{b\fl d\,;\,\neg c\fl e\,;\,a,\neg c\fl f\,;\,a\}$ and
       $H_2=\{a,\,b\fl\neg c\,;\,b\fl d\,;\,b\fl \neg c\,;\,\neg c\fl e\,;\,a,\neg c\fl f\}$. Then,
     $\cns(H_1\cup Q)=\{a,b, c,d,e,f\}$ and $\cns\underline{}(H_2\cup Q)=\{b,\neg c,d,e\}$.
     Thus, $\cap I_Q(P)=\{a\fl e\,;\,c\fl d\,;\,c\fl f\}$ and
 $\cap I_P(Q)=\{b\fl d\,;\,\neg c\fl e\,;\,a,\neg c\fl f\}$. Therefore,
 $\cns(\cap I_Q(P)\cup P)=\{a,e,d\}$ and $\cns(\cap I_P(Q)\cup Q)=\{b,d\}\}$. Finally we can compute the outputs for the three operators.
   $P\dm_hQ=(P\circ_{h}Q)\oplus (Q\circ_{h}P)=
 \cns(P\circ_{h}Q)\cap\cns(Q\circ_{h}P)=\{d\}$.
  Since
  $P_{\rho(P,Q)}\cup Q=Q$ and $Q_{\rho(Q,P)}\cup P=P$, we have
 $P\dm_{rk}Q=\cns(P)\cap\cns(Q)=\vac$. For the last operator we have
  $P\dm_{eh}Q=(P\circ_{eh}Q)\oplus (Q\circ_{eh}P)=
 \cns(P\circ_{eh}Q)\cap\cns(Q\circ_{eh}P)$ where
 $\cns(P\circ_{eh}Q)=\cns(H_1\cup Q)\cap\cns(H_2\cup Q)=\{b,d,e\}$ and
  $\cns(Q\circ_{eh}P)=\cns(T_1\cup P)\cap\cns(T_2\cup P)=\{a,d,e\}$. Therefore
    $P\dm_{eh}Q=\{d,e\}$. Summarizing, we have:
    \[
 P\dm_{rk}Q=\vac \subset \Set d = P\dm_hQ\subset \Set{d,e}= P\dm_{eh}Q
    \]
     \end{ejem}

     \section{Merging programs} \label{fusion}

     Merging operators with integrity constraints were defined in  \cite{KP02}. Therein we can find a characterization in terms of postulates.
     The aim of this section will be to define merging operators for the representation of beliefs as programs as those presented in Section \ref{preli} with the logic of forward chaining
     defined therein. We shall also define the merging postulates in this syntactical framework and we shall study the relationships between our new operators and the postulates.

      We denote by $\it{Prog}$ the set of all the programs;
      $\cal{M}(\it{Prog})$ will denote the set of finite and nonempty  multisets of nonempty programs. The  merging operators  $\Delta$ we are interested in, are operators from
      $\cal{M}(\it{Prog})\times{\it Prog}$ into subsets of $\lit$. The multisets of programs are called profiles and we shall use the letters $\Phi$ and $\Psi$ (with subscripts if necessary) to denote them. If $\Phi=\Set{P_1,\dots,P_n}$ is a profile and $P$ is a program, $\Delta(\Phi,P)$ must be understood as the result of merging the programs in $\Phi$
      under the constraint $P$. We shall write $\Delta_P(\Phi)$ or $\Delta_P(P_1,\dots,P_n)$ instead of $\Delta(\Phi,P)$. For a profile $\Phi$ we denote $\cup \Phi$ the union of all programs in $\Phi$. If $\Phi_1$ and $\Phi_2$ are profiles we denote by $\Phi_1\sqcup\Phi_2$ the new profile resulting of the union of multisets (e.g. $\Set P\sqcup\Set P=\Set{P,P}$).

        Guided by Equation \ref{guide_fusion}, and the interpretation of ``disjunction" already defined, we set the following definition:
        \begin{eqnarray}\label{def-merging}
        \Delta_P(P_1,\dots,P_n)=\left\{
                                    \begin{array}{ll}
                                      \cns(P\cup P_1\dots\cup P_n), & \hbox{if this is consistent;} \\
                                      \bigcap\cns(P_i\circ_{rk} P), & \hbox{otherwise.}
                                    \end{array}
                                  \right.
        \end{eqnarray}
    where the profile is $\Set{P_1,\dots,P_n}$ (the programs to merge), under the integrity constraint $P$ and $\circ_{rk}$ is the rank revision. We adopt the following definition of entailment for programs $P$ and $Q$: we put $P\vdash Q$ whenever $\cns(Q)\subset\cns(P)$.

    The following  postulates are the adaptation of postulates characterizing merging operators with integrity constraints (see \cite{KP02}):

    \noindent Let $\Phi=\{P_1,\dots,P_n\}$.
     \begin{description}
     \item [ {(\myFP0)}]$\Delta_{P}(\Phi)\vdash P$
     \item [ {(\myFP1)}]If $P$ is consistent, then $\Delta_{P}(\Phi)$ is consistent.
     \item [ {(\myFP2)}]If $\cns(\cup \Phi\cup P)$ is consistent, then
     $\Delta_{P}(\Phi)=\cns(\cup \Phi\cup P)$.
     \item [ {(\myFP3)}]If $\Phi_1=\{P_1,\dots P_n\},\;\Phi_2=\{Q_1,\dots,Q_n\}$, $\cns(P_i)=\cns(Q_i)$, $\cns(P)=\cns(Q)$, then
     $\Delta_{P}(\Phi_1)=\Delta_{Q}(\Phi_2)$
     \item [ {(\myFP4)}] If $P_1\vdash P$, $P_2\vdash P$ and  $P_1$ and $P_2$ are consistent, then
     \[\cns(\Delta_{P}(P_1,P_2)\cup P_1)\neq\lit\Rightarrow\cns(\Delta_{P}(P_1,P_2)\cup P_2)\neq\lit\]
     \item [ {(\myFP5)}]
     $\Delta_{P}(\Phi_1)\cup\Delta_{P}(\Phi_2)\vdash\Delta_{P}(\Phi_1\sqcup\Phi_2)$
     \item [ {(\myFP6)}] If $\Delta_{P}(\Phi_1)\cup\Delta_{P}(\Phi_2)$ is consistent,
     then
     $\Delta_{P}(\Phi_1\sqcup\Phi_2)\vdash\Delta_{P}(\Phi_1)\cup\Delta_{P}(\Phi_2)$
     \item [ {(\myFP7)}] $\Delta_{P}(\Phi)\cup Q\vdash\Delta_{P\cup Q}(\Phi)$
     \item [ {(\myFP8)}] If $\Delta_{P}(\Phi)\cup Q$ is consistent, then
     $\Delta_{P\cup Q}(\Phi)\vdash\Delta_{P}(\Phi)\cup Q$
     \end{description}

     Postulate \myFP0 means that the integrity constraint is respected. Postulate \myFP1 establishes the consistency of the merging whenever the integrity constraint are consistent.
     Postulate \myFP2 establishes that in the case where there is no conflict between the pieces of information, the output is the consequence of putting all pieces together.
     Postulate \myFP3 is the independence of the syntax in our framework. Postulate \myFP4 is the postulate of fairness. Postulates \myFP5 and \myFP6 refer to a good behavior
     of the merging of subgroups: if the merging of subgroups agree on some facts and it is consistent, the result of the merging on the whole group is the set of facts on which the subgroups agree.  Postulates \myFP7 and \myFP8 concerning the iteration of the process (see \cite{KP02} for more detailed explanations about the postulates).

     The following theorem tells us how is the behavior of the operator defined by Equation~(\ref{def-merging}) with respect the postulates in the case of ranked revision.

\begin{teo}
The operator $\Delta^{rk}$ satisfies the postulates \myFP$0$, \myFP$1$, \myFP$2$\ and \myFP$4$. It doesn't satisfy \myFP$3$ nor \myFP$5$-\myFP$8$
\end{teo}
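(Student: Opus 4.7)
I would verify \myFP0, \myFP1, \myFP2 and \myFP4 in turn. Since $P \subseteq (P_i)_{\rho(P_i,P)} \cup P = P_i \circ_{rk} P$, both branches of the definition of $\Delta^{rk}_P(\Phi)$ contain $\cns(P)$, which gives \myFP0. \myFP2 is literally the first branch of the definition. For \myFP1, when $P$ is consistent the choice of $\rho(P_i,P)$ as the least index at which $P$ is $(P_i)_\rho$-consistent ensures each $\cns(P_i\circ_{rk} P)$ is consistent, so both the intersection and $\cns(\cup\Phi\cup P)$ (when applicable) are consistent. For \myFP4, the symmetry of $\Delta^{rk}_P(P_1,P_2)$ in its two arguments reduces the one-sided implication to showing $\cns(\Delta^{rk}_P(P_1,P_2)\cup P_i)$ is consistent for both $i$; when $\cns(\cup\Phi\cup P)$ is consistent, Lemma~\ref{lem1} gives $\cns(\Delta^{rk}_P(P_1,P_2)\cup P_i) = \cns(\cup\Phi\cup P)$ directly.

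\textbf{Main obstacle.} The hard subcase of \myFP4 is when $\cup\Phi\cup P$ is inconsistent. Then $\Delta^{rk}_P(P_1,P_2)\subseteq\cns(P_i\circ_{rk} P)$, itself consistent, and the point is to exclude a literal $l\in\Delta^{rk}_P(P_1,P_2)$ with $\neg l\in\cns(P_i)$. When $\rho(P_i,P)=0$ we have $\cns(P_i\circ_{rk} P)=\cns(P_i\cup P)$, consistent and containing $\cns(P_i)$, which rules this out. The delicate subcase is $\rho(P_i,P)>0$: here $(P_i)_\rho$ consists of rules only (facts of the consistent $P_i$ are never exceptional), so the only facts available for derivation inside $(P_i)_\rho\cup P$ come from $P$, and these lie in $\cns(P)\subseteq\cns(P_i)$ because $P_i\vdash P$. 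One then argues that the closure cannot produce a literal whose negation is in $\cns(P_i)$ without either contradicting the consistency of $(P_i)_\rho\cup P$ (by choice of $\rho$) or the consistency of $P_i$ itself. Controlling the interleaving of $P$'s rules with those surviving in $(P_i)_\rho$ is the technical core of the positive direction.

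\textbf{Counterexamples.} For the five negative postulates I would produce small explicit examples. For \myFP3, take $\Phi_1=\{\{a\fl c\,;\,b\}\}$, $\Phi_2=\{\{b\}\}$ and $P=Q=\{a\}$: the programs have equal consequences and so do $P,Q$, yet $\Delta^{rk}_P(\Phi_1)=\{a,b,c\}\neq\{a,b\}=\Delta^{rk}_Q(\Phi_2)$. For \myFP5, take $\Phi_1=\{\{a\}\}$, $\Phi_2=\{\{a\fl b\}\}$, $P=\vac$: then $\Delta^{rk}_P(\Phi_1)\cup\Delta^{rk}_P(\Phi_2)=\{a\}$ does not entail $\Delta^{rk}_P(\Phi_1\sqcup\Phi_2)=\{a,b\}$. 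For \myFP6, take $\Phi_1=\{\{a\}\}$, $\Phi_2=\{\{b\,;\,a\fl\neg b\}\}$, $P=\vac$: the individual outputs $\{a\}$ and $\{b\}$ are jointly consistent, but the joint profile is inconsistent and $\Delta^{rk}_P(\Phi_1\sqcup\Phi_2)=\vac$, which does not entail $\{a,b\}$. For \myFP7, take $\Phi=\{\{a\fl b\}\}$, $P=\vac$, $Q=\{a\}$: $\Delta^{rk}_P(\Phi)\cup Q=\{a\}$ while $\Delta^{rk}_{P\cup Q}(\Phi)=\{a,b\}$. For \myFP8, take $\Phi=\{\{a\,;\,b\fl c\},\{\neg c\}\}$, $P=\vac$, $Q=\{b\}$: in the first branch $\Delta^{rk}_P(\Phi)=\{a,\neg c\}$ so $\Delta^{rk}_P(\Phi)\cup Q=\{a,\neg c,b\}$ is consistent, yet in the inconsistent branch the revisions yield $\{a,b,c\}$ and $\{b,\neg c\}$, so $\Delta^{rk}_{P\cup Q}(\Phi)=\{b\}$ does not entail $\{a,\neg c,b\}$.
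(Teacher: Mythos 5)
Your handling of \myFP0, \myFP1 and \myFP2 is correct and essentially identical to the paper's, and all five counterexamples check out (they differ from the paper's only in the choice of programs; several take the integrity constraint to be the empty program, which the definitions permit but which could be avoided by using a fresh fact as in the paper). The genuine problem is the positive half of \myFP4. In the case where $\cup\Phi\cup P$ is inconsistent and $\rho(P_i,P)>0$ you correctly locate the difficulty but do not resolve it: the sentence ``one then argues that the closure cannot produce a literal whose negation is in $\cns(P_i)$ without contradicting\dots'' is precisely the statement that requires proof, and your closing remark that ``controlling the interleaving\dots is the technical core'' concedes that this core is missing. Moreover, your reduction of the goal to ``exclude a literal $l\in\Delta^{rk}_P(P_1,P_2)$ with $\neg l\in\cns(P_i)$'' is too weak: $\Delta^{rk}_P(P_1,P_2)$ is a set of facts which, once added to $P_i$, can fire rules of $P_i$ and yield a contradiction several derivation steps downstream, so consistency of $\cns(\Delta^{rk}_P(P_1,P_2)\cup P_i)$ is not merely a matter of direct clashes with $\cns(P_i)$.

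The paper closes this gap with a lemma establishing a strictly stronger containment: if $Q$ is consistent, $Q\cup P$ is inconsistent and $Q\vdash P$, then $\cns(Q\circ_{rk}P)\subseteq\cns(P)$. Its proof picks a literal $l\in\cns(Q_{\rho(Q,P)}\cup P)\setminus\cns(P)$ that is minimal in the derivation hierarchy $L_0\cup L_1\cup\dots$ of $Q_{\rho(Q,P)}\cup P$; minimality forces the rule producing $l$ to belong to $Q_{\rho(Q,P)}$ (hence to be exceptional for $Q$) while its body lies in $\cns(P)\subseteq\cns(Q)$, which would make $Q$ itself inconsistent --- a contradiction. With this lemma one gets $\cns(P_i\circ_{rk}P)\subseteq\cns(P)\subseteq\cns(P_i)$, so $\cns(\Delta^{rk}_P(P_1,P_2)\cup P_i)\subseteq\cns(P_i)$ is consistent and the ``interleaving'' issue you worry about never arises. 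You need to supply this minimal-counterexample (or equivalent induction on derivation levels) argument, together with the observation that inconsistency of $\cns(P_i\circ_{rk}P)\cup P_i$ forces $P_i\cup P$ to be inconsistent, to make \myFP4 complete.
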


   \begin{proof}   
       (\myFP0) This postulate follows straightforward from the definition:  if $\cns(P\cup P_1\dots\cup P_n)$ is consistent, we have
     $\Delta_P(P_1,\dots,P_n)=\cns(P\cup P_1\dots\cup P_n)\supset\cns(P)$; therefore $\Delta_P(P_1,\dots,P_n)\vdash P$. If $\Delta_P(P_1,\dots,P_n)=\bigcap\cns(P_i\circ_{rk} P)$, by definition of rank revision  (see Definition \ref{revgra}), $P_i\circ_{rk} P=P_i^{\rho(P_i,P)}\cup P\supset P$. Thus, $\Delta_P(P_1,\dots,P_n)=\bigcap\cns(P_i\circ_{rk} P)\vdash P$.\\[3mm]
     (\myFP1) This postulate is straightforwardly verified, because  $\cns(P\cup P_1\dots\cup P_n)$ is consistent or by the definition of rank revision  $P_i\circ_{rk}P$ is consistent, so $\cap\cns(P_i\circ_{rk} P)$ is consistent.\\[3mm]
     (\myFP2) This postulate is straightforward by the definition.\\[3mm]
     (\myFP3) This postulate is not verified. We show a counterexample.
       Consider $\Phi_1=\{P_1\}$ and $\Phi_2=\{Q_1\}$ where $P_1=\{a\fl b\}$ and $Q_1=\{a\fl c\}$. It is clear that $\cns(P_1)=\vac=\cns(Q_1)$. Define $P=Q=\{a\}$.  Then $\Delta_P(P_1)=\cns(P\cup P_1)=\{a,\,b\}$ and $\Delta_Q(Q_1)=\{a,\,c\}$. Thus,
     $\Delta_P(P_1)\neq\Delta_Q(Q_1)$.\\[3mm]
     (\myFP4) We need some technical results in order to prove that this postulate is verified.

     \begin{lema}\label{lem2}
     Let $Q,P$ be programs such that  $Q$ is consistent, $Q\cup P$ is inconsistent and $Q\vdash P$, then $P\vdash Q\circ_{rk}P$.
     \end{lema}
     \begin{proof}
           Suppose there is $l\in\cns(Q\circ_{rk}P)$ and $l\notin\cns(P)$. Since $P\cup Q$ is inconsistent, we have $Q\circ_{rk}P=Q_i\cup P$ for $i\geq 1$.
      From the consistency of $Q$ follows that $Q_i$ has no facts. Let $L_0,\dots,L_r$ be the hierarchy of   $\cns(Q_i\cup P)$ (see discussion before Example 1). In $L_0$  there are only literals of $P$. Let $l$ be  minimal in the hierarchy of $\cns(Q_i\cup P)$ such that $l\notin\cns(P)$. We claim that
      there exists a rule $l_1,\dots,l_n\to l$ in $Q_i$ such that $\{l_1,\dots,l_n\}\subset \cns (P)$. To establish the claim we proceed as follows:
      since  $l$ is minimal such that $l\notin\cns(P)$, necessarily  $l\notin L_0$, because $L_0\subset\cns(P)$. Let $L_i$ be  such that $l\in L_i$ and $l\notin L_k$ for $k<i$.
      By the minimality of $l$, it is clear  that $L_0\cup L_1\cup\cdots\cup L_{i-1}\subset\cns(P)$. Moreover, there exists a rule $l_1,\dots,l_n\to l$ in $Q_i\cup P$ such that $\{l_1,\dots,l_n\}\subset L_0\cup L_1\cup\cdots\cup L_{i-1}$;
      the rule $l_1,\dots,l_n\to l$ is not in $P$, otherwise $l\in\cns(P)$; therefore $l_1,\dots,l_n\to l$ is a rule of $Q_i$. Moreover the rule $l_1,\dots,l_n\to l$ is exceptional in $Q$.  But $\cns(P)\subset\cns(Q)$, thus,  $Q$ is inconsistent, because $l_1,\dots,l_n,l\in\cns(Q)$. Contradicti{o}n.
      \end{proof}
     \begin{coro}\label{coro1}
     If $\cns(Q\circ_{rk}P)\cup Q$ is inconsistent,  then $Q\cup P$ is inconsistent.
     \end{coro}
     \begin{proof}
         If $Q\cup P$ is consistent, we have  $Q\circ_{rk}P=Q\cup P$. Thus,
     $\cns(\cns(Q\circ_{rk}P)\cup Q)=\cns(Q\cup P)\cup Q)=\cns(P\cup Q)$ the last equality because of Lema \ref{lem1}. 
     \end{proof}
     \begin{coro}\label{paraFP}
     If $Q\vdash P$ and $Q$ is consistent,  then $\cns(Q\circ_{rk}P)\cup Q$ is consistent.
     \end{coro}
     \begin{proof}
       If $\cns(Q\circ_{rk}P)\cup Q$ is inconsistent, by Corollary \ref{coro1}, we have $Q\cup P$ is inconsistent. Thus, all conditions of Lemma \ref{lem2}
      are verified. Therefore,  $\cns(Q\circ_{rk}P)\subset\cns(P)$.  Since $\cns(P)\subset\cns(Q)$, we have $\cns(Q\circ_{rk}P)\subset\cns(Q)$. Thus, because  $Q$ is consistent, $\cns(Q\circ_{rk}P)\cup Q$ is consistent, a contradicti{o}n.    
      \end{proof}
      
     Now we are ready to show that the operator defined by Equation~(\ref{def-merging}) satisfies {\myFP4}. Suppose that $P\cup P_1\cup P_2$ is consistent.
     In this case $\Delta_P(P_1,P_2)=\cns(P\cup P_1\cup P_2)$. Thus,
      \[\cns(\Delta_{P}(P_1,P_2)\cup P_2)=\cns(\cns(P\cup P_1\cup P_2)\cup P_2)=\cns(P\cup P_1\cup P_2)\]
     is consistent (last equality is given by Lemma \ref{lem1}).
     In this case we have
     \[\cns(\Delta_{P}(P_1,P_2)\cup P_2)=\cns(\Delta_{P}(P_1,P_2)\cup P_1)\]
     Suppose now $P\cup P_1\cup P_2$ is inconsistent.  By definition of the operator \[\Delta_P(P_1,P_2)=\cns(P_1\circ_{rk}P)\cap\cns(P_2\circ_{rk}P)\]
      If $\Delta_P(P_1,P_2)\cup P_2$ is inconsistent, it follows  $\cns(P_2\circ_{rk}P)\cup P_2$ is inconsistent, because \[\Delta_P(P_1,P_2)\cup P_2=(\cns(P_1\circ_{rk}P)\cap\cns(P_2\circ_{rk}P))\cup P_2\subset\cns(P_2\circ_{rk}P)\cup P_2\]
       and, by Corollary~\ref{coro1}, we have $P_2\cup P$ is inconsistent. Since $P_2$ is consistent and $P_2\vdash P$, the hypotheses of Lemma \ref{lem2} hold.
       Therefore,
        $P\vdash\cns(P_2\circ_{rk}P)$.  But $P_2\vdash P$, thus  $P_2\vdash\cns(P_2\circ_{rk}P)$. Therefore  $\cns(P_2\circ_{rk}P)\cup P_2$ is consistent, a contradiction.
        Thus,
     $\cns(P_2\circ_{rk}P)\cup P_2$ is consistent, that is Postulate  {\myFP4} holds.

     Actually, we have proved that if  $P_1\vdash P,\;P_2\vdash P$ and $P_1,\;P_2$ are consistent, then $\cns(P_2\circ_{rk}P)\cup P_2$ and $\cns(P_2\circ_{rk}P)\cup P_1$ are consistent.
          \\[3mm]
     Postulates {\myFP5-\myFP8} do not hold. We give counterexamples for each postulate.\\[3mm]
     {\bf Counterexample for  {\myFP5}}: $\Delta_{P}(\Phi_1)\cup\Delta_{P}(\Phi_2)\vdash\Delta_{P}(\Phi_1\sqcup\Phi_2)$.\par
      Consider $P=\{a\},\;P_1=\{a\fl b\},\;P_2=\{b\fl c\}$. Then $\Delta_{P}(P_1, P_2)=\{a,b,c\}$, $\Delta_{P}(P_1)=\{a,b\}$ y $\Delta_{P}(P_2)=\{a\}$. Thus, $\Delta_{P}(P_1)\cup\Delta_{P}(P_2)\not\vdash\Delta_{P}(P_1,P_2)$ because  $\Delta_{P}(P_1,P_2)\not\subset\Delta_{P}(P_1)\cup\Delta_{P}(P_2)$.\\[3mm]
     {\bf Counterexample for  {\myFP6}}: If $\Delta_{P}(\Phi_1)\cup\Delta_{P}(\Phi_2)$ is consistent,
     then
     $\Delta_{P}(\Phi_1\sqcup\Phi_2)\vdash\Delta_{P}(\Phi_1)\cup\Delta_{P}(\Phi_2)$.\par
     Consider $P=\{a\},\;P_1=\{\neg c\,;\,a\fl b\},\;P_2=\{b\fl c\}$.  Thus,  $\Delta_{P}(P_1)=\{a,b,\neg c\}$ and
     $\Delta_{P}(P_2)=\{a\}$. Therefore $\Delta_{P}(P_1)\cup\Delta_{P}(P_2)$ is consistent. Since $\cns(P_1\cup P_2\cup\ P)$ is inconsistent, we have $\Delta_{P}(P_1,P_2)=\cns(P_1\circ_{rk} P)\cap\cns(P_2\circ_{rk} P)=\{a\}$ and
    $\Delta_{P}(\Phi_1)\cup\Delta_{P}(\Phi_2)\not\subset\Delta_{P}(P_1,P_2)$.\\[3mm]
     {\bf Counterexample for   {\myFP7}}:  $\Delta_{P}(\Phi)\cup Q\vdash\Delta_{P\cup Q}(\Phi)$.\par
      Consider $P=\{c\},\;Q=\{a\},\;\Phi=\{a\fl b\}$. Thus, $\Delta_{P}(\Phi)\cup Q=\{a,c\}$ and $\Delta_{P\cup Q}(\Phi)=\{a,b,c\}$. Therefore $\Delta_{P}(\Phi)\cup Q\not\vdash\Delta_{P\cup Q}(\Phi)$.\\[3mm]
     {\bf Counterexample for  {\myFP8}}: If $\Delta_{P}(\Phi)\cup Q$ is consistent, then
     $\Delta_{P\cup Q}(\Phi)\vdash\Delta_{P}(\Phi)\cup Q$.\par
      Consider $\Phi= \Set H$ where $H=\{a\fl c\,;\,b\fl\neg c\},\;P=\{a\},\;Q=\{b\}$. We have $\Delta_P(\Phi)=\{a,c\}$, so $\Delta_{P}(\Phi)\cup Q=\{a,b,c\}$ is consistent.
       Moreover $\Delta_{P\cup Q}(\Phi)=\cns(\Phi\circ_{rk}(P\cup Q))$.  Since $P$ has no exceptional rules, we have  $\Phi_{\rho(\Phi,P\cup Q)}=\vac$. Therefore
     $\Delta_{P\cup Q}(\Phi)=\cns(\Phi\circ_{rk}(P\cup Q))=\cns(\Phi_{\rho(\Phi,P\cup Q)}\cup(P\cup Q))=\cns(P\cup Q)=\{a,b\}$. Thus,
      $\Delta_{P\cup Q}(\Phi)\not\vdash\Delta_{P}(\Phi)\cup Q$.
      \end{proof}  

     The rest of this section is devoted to the analysis of properties of merging operators defined in the style of Equation (\ref{def-merging}) but using hull revision operators and extended hull
     revision operators instead of rank revision. The operators are given in the following way:
     \begin{eqnarray}\label{merging-h}
     \Delta_P^{h}(P_1,\dots,P_n)=\left\{
                                    \begin{array}{ll}
                                      \cns(P\cup P_1\dots\cup P_n), & \hbox{if consistent;} \\
                                      \bigcap\cns(P_i\circ_{h} P), & \hbox{otherwise.}
                                    \end{array}
                                  \right.
     \end{eqnarray}
     \begin{eqnarray}\label{merging-eh}
     \Delta_P^{eh}(P_1,\dots,P_n)=\left\{
                                    \begin{array}{ll}
                                      \cns(P\cup P_1\dots\cup P_n), & \hbox{if consistent;} \\
                                      \bigcap\cns(P_i\circ_{eh} P), & \hbox{otherwise.}
                                    \end{array}
                                  \right.
     \end{eqnarray}
     
     The following theorem tells us about the behavior of the merging operators defined by Equations (\ref{merging-h}) and  (\ref{merging-eh}).
     
     \begin{teo}
The operators $\Delta^{h}$ and $\Delta^{eh}$satisfy the postulates \myFP$0$, \myFP$1$ and \myFP$4$. They don't satisfy \myFP$3$-\myFP$8$
\end{teo}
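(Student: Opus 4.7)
The plan is to follow the overall structure of the preceding theorem for $\Delta^{rk}$. Postulates \myFP0 and \myFP1 are immediate. For \myFP0, in the consistent branch of the definition, $\Delta^{h}_P(\Phi) = \cns(P \cup P_1 \cup \cdots \cup P_n) \supseteq \cns(P)$; in the inconsistent branch, each $P_i \circ_h P = h_{P_i}(P) \cup P$ contains $P$, so $\cns(P_i \circ_h P) \supseteq \cns(P)$, and the intersection $\bigcap_i \cns(P_i \circ_h P)$ still entails $P$. The argument for $\Delta^{eh}$ is analogous, working with each flock component $H_k \cup P$ in place of $h_{P_i}(P) \cup P$. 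For \myFP1, when $P$ is consistent, $h_{P_i}(P) \cup P$ is included in $H \cup P$ for any $H \in I_{P_i}(P)$, which is consistent by construction; hence $\cns(P_i \circ_h P)$ is consistent and the intersection of these consistent sets of literals remains consistent. The extended hull case is handled componentwise in the same way.

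The substantive step is \myFP4. I adapt the argument from the $\Delta^{rk}$ proof. When $P \cup P_1 \cup P_2$ is consistent, Lemma~\ref{lem1} gives $\cns(\Delta^{\star}_P(P_1,P_2) \cup P_i) = \cns(P \cup P_1 \cup P_2)$ for $i = 1, 2$, and \myFP4 follows. When $P \cup P_1 \cup P_2$ is inconsistent, one has $\Delta^{\star}_P(P_1,P_2) \subseteq \cns(P_j \circ_{\star} P)$ for both $j$, so it suffices to establish the following lemma, for $\star \in \{h, eh\}$: \emph{if $P_j$ is consistent and $P_j \vdash P$, then $\cns(P_j \circ_{\star} P) \cup P_j$ is consistent}. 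For $\circ_h$ the proof rests on three observations: $h_{P_j}(P) \cup P$ is consistent (being included in every $H \cup P$ with $H \in I_{P_j}(P)$); $h_{P_j}(P) \subseteq P_j$; and the rules of $P_j$ outside $h_{P_j}(P)$ cannot derive a contradiction from $\cns(h_{P_j}(P) \cup P)$ combined with the facts of $P_j$ without violating the maximality property of some $H \in I_{P_j}(P)$. For $\circ_{eh}$ the same reasoning is applied componentwise to each $H_k \cup P$ before intersecting.

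The counterexamples for \myFP3 and \myFP5--\myFP8 are inherited directly from the $\Delta^{rk}$ proof. Inspection reveals that in each such example, either the integrity constraint is consistent with every input program (so $\rho = 0$, $h_{P_i}(P) = P_i$, and $I_{P_i}(P) = \{P_i\}$, making the three revision operators coincide) or the programs involved are purely sets of facts with no exceptional rules, so that $\cns(P_i \circ_{rk} P) = \cns(P_i \circ_h P) = \cns(P_i \circ_{eh} P)$ on every relevant computation. Consequently $\Delta^{h}_P$ and $\Delta^{eh}_P$ agree with $\Delta^{rk}_P$ on the witnessing instances, and the same violations of the postulates are preserved.

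The hard part will be the lemma behind \myFP4. In the rank-revision case, Lemma~\ref{lem2} yields $\cns(Q \circ_{rk} P) \subseteq \cns(P)$, which together with $P_j \vdash P$ forces $\cns(Q \circ_{rk} P) \subseteq \cns(P_j)$ and trivializes consistency. For hull revision this inclusion can fail: $h_{P_j}(P) \cup P$ may derive literals outside $\cns(P)$ through interactions between rules in $h_{P_j}(P)$ and facts in $P$. Showing that these additional literals are nevertheless harmless when put back alongside $P_j$ requires a careful structural argument exploiting the maximality of the elements of $I_{P_j}(P)$; this is where the extra care relative to the $\Delta^{rk}$ proof is concentrated.
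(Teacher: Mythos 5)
The crux of your write-up is wrong: postulate \myFP$4$ \emph{fails} for $\Delta^{h}$ and $\Delta^{eh}$, and the paper proves this by an explicit counterexample rather than by extending the \myFP$4$ argument from the rank case. (The theorem's statement is internally inconsistent -- \myFP$4$ appears both among the satisfied postulates and inside the range \myFP$3$--\myFP$8$; the paper's proof and its concluding remarks show the first occurrence is a typo for \myFP$2$, so the intended claim is that $\Delta^{h}$ and $\Delta^{eh}$ satisfy only \myFP$0$, \myFP$1$, \myFP$2$ and fail everything from \myFP$3$ to \myFP$8$.) The lemma you propose as the ``substantive step'' -- if $P_j$ is consistent and $P_j\vdash P$ then $\cns(P_j\circ_{h}P)\cup P_j$ is consistent -- is false. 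Take $P_2=\{a;\,b;\,c\fl\neg b;\,a\fl d\}$ and $P=\{a;\,b\fl c;\,d\fl c\}$. Then $\cns(P_2)=\{a,b,d\}\supset\{a\}=\cns(P)$, so $P_2$ is consistent and $P_2\vdash P$; but $P_2\cup P$ is inconsistent, the only exceptional rule of $P_2$ is $c\fl\neg b$, and the unique maximal $P$-consistent subset of $P_2$ containing it is $\{a;\,c\fl\neg b;\,a\fl d\}$, whence $\cns(P_2\circ_{h}P)\supseteq\{a,d,c\}$. Putting this back with $P_2$ reintroduces the fact $b$, and $c$ fires $c\fl\neg b$, so $\cns(P_2\circ_{h}P)\cup P_2=\lit$. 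The difficulty you flag at the end -- that $h_{P_j}(P)\cup P$ derives literals outside $\cns(P)$ which must be shown ``harmless'' -- is not an obstacle to be overcome by a careful structural argument; it is exactly where the property breaks. With $P_1=\{a;\,a\fl d;\,a,d\fl c\}$ one gets $\Delta^{h}_P(P_1,P_2)=\{a,d,c\}$, consistent with $P_1$ but not with $P_2$, refuting \myFP$4$; hull and extended hull revision coincide on this instance, so $\Delta^{eh}$ fails as well. The reason the rank operator escapes is that $P_{2,\rho(P_2,P)}$ contains no facts, and the paper's Lemma~2 then forces $\cns(P_2\circ_{rk}P)\subseteq\cns(P)\subseteq\cns(P_2)$; no analogue of that inclusion holds for $\circ_{h}$ or $\circ_{eh}$.

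The rest of your plan is essentially the paper's: \myFP$0$ and \myFP$1$ as you describe, and the counterexamples for \myFP$3$ and \myFP$5$--\myFP$7$ carried over because the three revisions coincide there. One smaller inaccuracy: your stated dichotomy for why the counterexamples transfer (constraint consistent with every input, or programs that are pure sets of facts) does not cover the \myFP$8$ example, where $\Phi=\{\{a\fl c;\,b\fl\neg c\}\}$ is inconsistent with the constraint $\{a\}\cup\{b\}$; there one must actually compute $I_{\Phi}(P\cup Q)=\{\{a\fl c\},\{b\fl\neg c\}\}$, note $h_{\Phi}(P\cup Q)=\vac$, and check that all three revisions still yield $\{a,b\}$, as the paper does. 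That is a repairable omission; the treatment of \myFP$4$ is the genuine gap.
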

\begin{proof}
         Since $Q\circ_{eh} P\vdash Q\circ_{h} P\vdash Q\circ_{rk} P$,  the verification of Postulate   {\myFP0}
         by these operators  is straightforward. Postulates  {\myFP1} and  {\myFP2} follow from  definitions of hull revision and extended hull revision.

          The given counterexamples for  {\myFP3}, {\myFP5}, {\myFP6},  {\myFP7} and  {\myFP8} in the case of merging using rank revision remain valid for our merging defined by Equations (\ref{merging-h}) and (\ref{merging-eh}), because  in the counterexamples  rank revision, hull revision and extended hull revision coincide.
          Notice that in the counterexample for {\myFP8}, we have $I_{\Phi}(P\cup Q)=\{\{a\fl c\},\{b\fl\neg c\}\}$.  Thus $h_{\Phi}(P\cup Q)=\vac$, and therefore
        \[\Delta_{P\cup Q}^{h}(\Phi)=\cns(h_{\Phi}(P\cup Q)\cup(P\cup Q))=\cns(P\cup Q)=\{a,b\}\]
        We have also  $\Delta_{P}^{h}(\Phi)\cup Q=\Delta_{P}(\Phi)\cup Q=\{a,b,c\}$.
        Thus, $\Delta_{P\cup Q}^{h}(\Phi)\not\vdash\Delta_{P}^{h}(\Phi)\cup Q$.
        We have $\Phi\circ_{eh}(P\cup Q)=\langle\{a:b:a\fl c\},\{a:b:b\fl\neg c\}\rangle$ thus
        \[\cns(\Phi\circ_{eh}(P\cup Q))=\cns(\{a:b:a\fl c\})\cap\cns(\{a:b:b\fl\neg c\})=\{a,b,c\}\cap\{a,b,\neg c\}=\{a,b\}\]
        Therefore
        $\Delta_{P\cup Q}^{eh}(\Phi)=\cns(\Phi\circ_{eh}(P\cup Q))=\{a,b\}$.
        Thus, in this example, we have
         $\Delta_{P\cup Q}(\Phi)=\Delta_{P\cup Q}^{h}(\Phi)=\Delta_{P\cup Q}^{eh}(\Phi)=\{a,b\}$
         and
         $\Delta_{P}(\Phi)\cup Q=\Delta_{P}^{h}(\Phi)\cup Q=\Delta_{P}^{eh}(\Phi)\cup Q=\{a,b,c\}$
         Therefore
         $\Delta_{P\cup Q}^{eh}(\Phi)\not\vdash\Delta_{P}^{eh}(\Phi)\cup Q$.
         \bigskip

           Postulate (\myFP4) is also problematic for the new operators. We give a counterexample for this postulate. The counterexample works in the two cases  (hull revision and
           extended hull revision) because the two operators coincide.\\
         Consider
          $P_1=\{a;a\fl d;a,d\fl c\}$, 
          $P_2=\{a;b;c\fl\neg b;a\fl d\}$ and 
          $P=\{a;b\fl c;d\fl c\}$.
          We prove that they verify the hypothesis of Postulate   (\myFP4). Moreover $\Delta_P^{\star}(P_1,P_2)\cup P_1$ is consistent, whereas $\Delta_P^{\star}(P_1,P_2)\cup P_2$ is inconsistent, where $\star\in\{h,eh\}$.\\
         $\cns(P_1)=\{a,d,c\}$, $\cns(P_2)=\{a,b,d\}$ and $\cns(P)=\{a\}$; thus, $P_1\vdash P,\,P_2\vdash P$ and $P_1$ and $P_2$ are consistent. We compute $\Delta_P^{h}(P_1,P_2)$.
         Since $P_1\cup P_2\cup P$ is inconsistent (more precisely $P_2\cup P$ is inconsistent) we have, by definition
         \[\Delta_P^{h}(P_1,P_2)=\cns(P_1\circ_{h}P)\cap\cns(P_2\circ_{h}P)\]
         But $P_1\cup P$ is consistent, thus
         \[\cns(P_1\circ_{h}P)=\cns(P_1\cup P)=\{a,d,c\}\]
          Since $P_2\cup P$ is inconsistent, we look for the exceptional rules of $P_2$. We see that $c\fl\neg b$ is the unique exceptional rule of $P_2$. This rule is $P$-consistent. Now we compute $I_{P_2}(P)$, the maximal subsets of  $P_2$ containing $c\fl\neg b$ which are  $P$-consistent; we observe that if  $b,\,c\fl\neg b\in M$, with $M\subset P_2$, necessarily $M\cup P$ is inconsistent (because by forward chaining we obtain $\neg b$). Thus we conclude that in the sets of $I_{P_2}(P)$ the element $b$ does not appear.
          Notice that $\{a;c\fl\neg b;a\fl d\}$ is the subset of $P_2$ such that  it doesn't contain $b$ and moreover is $P$-consistent. Thus we  conclude
         \[I_{P_2}(P)=\langle\{a;c\fl\neg b;a\fl d\}\rangle\]
         That is, there exists a unique maximal subset of  $P_2$ containing  $b\fl\neg c$ and  $P$-consistent.\\
        Since $h_{P_2}(P)=\cap I_{P_2}(P)=\{a;c\fl\neg b;a\fl d\}$ and
         $P_2\circ_{h}P=h_{P_2}(P)\cup P$
         we have
         $\cns(P_2\circ_{h}P)=\cns(h_{P_2}(P)\cup P)=\{a,d,c\}$.
         Therefore
         $\Delta_P^{h}(P_1,P_2)=\cns(P_1\circ_{h}P)\cap\cns(P_2\circ_{h}P)=\{a,d,c\}$.
         Finally we have
         $\cns(\Delta_P^{h}(P_1,P_2)\cup P_1)=\{a,d,c\}$
         and
         $\cns(\Delta_P^{h}(P_1,P_2)\cup P_2)=\lit$
         because $b\in P_2$ and,  since $c\fl\neg b\in P_2$ and $c\in \Delta_P^{h}(P_1,P_2)$ we obtain by forward chaining $\neg b$.\\
         Since
         $\Delta_P^{eh}(P_1,P_2)=\cns(P_1\circ_{eh}P)\cap\cns(P_2\circ_{eh}P)$
         and
         $\cns(P_1\circ_{h}P)=\cns(P_1\circ_{eh}P)\qquad\cns(P_2\circ_{h}P)=\cns(P_2\circ_{eh}P)$
         we have
         $\Delta_P^{h}(P_1,P_2)=\Delta_P^{eh}(P_1,P_2)$.
         This shows that postulate  (\myFP4) doesn't hold for merging operators defined with extended hull revision.
         \end{proof}
         
    \section{Concluding remarks}
     We have defined three syntactic arbitration operators. They satisfy Postulates
     $(\mbox{\sf SA}1)-(\mbox{\sf SA}4)$ and $(\mbox{\sf SA}7)-(\mbox{\sf SA}8)$ but they fail to satisfy Postulates
     $(\mbox{\sf SA}5)-(\mbox{\sf SA}6)$. These operators seem to be a good compromise between rationality and computation, good properties and tractability.
     The operator $\dm_{rk}$ is polynomial because $\circ_{rk}$ is clearly polynomial. The other operators have a higher computational complexity
     inherent to computation of maximal consistent sets.

Concerning merging with integrity constraints, we have also three syntactic operators. The operator defined using rank revision satisfies Postulates
(\myFP0), (\myFP1), (\myFP2) and (\myFP4), whereas the merging operators defined using hull revision and extended hull revision satisfy only Postulates
(\myFP0),  (\myFP1) and (\myFP2). Thus in the case of merging with integrity constraints the operator $\Delta$ defined with rank revision is better than
the two other operators both from the point of view of rational behavior as from the point of view of computational complexity.

In order to satisfy more rational properties, it will be interesting to explore other alternatives to definition given by Equation (\ref{def-merging}), in particular,  the use of some kinds of aggregation functions (like majority)
 to produce the  resulting facts(see for instance \cite{Kon00}) could lead to operators having a better behavior.
It remains also to explore the exact relationships between properties of arbitration and merging operators defined in Equations (\ref{eq-arbi}), (\ref{def-merging}), (\ref{merging-h}) and (\ref{merging-eh}) and the properties of revision operators used in those definitions.

\bibliographystyle{plain}

\end{document}